\newcommand{\set}[1]{\ensuremath{\mathcal{#1}}}
\newcommand{\pixelval}{\ensuremath{I_{mn}^t}}
\theoremstyle{plain}
\newtheorem{lemma}{Lemma}
\theoremstyle{definition}
\newtheorem{defn}{Definition} 
\newtheorem{problem}{Problem} 
\title{\LARGE \bf
Image Space Potential Fields: Constant Size Environment Representation for Vision-based Subsumption Control Architectures
}
\author{Jeffrey Kane Johnson$^{1}$
\thanks{$^{1}$Jeffrey Kane Johnson received his PhD from Indiana University and is principal of Maeve Automation,
        Mountain View, CA 94043\newline
        \href{mailto:contact@maeveautomation.com}{\nolinkurl{contact@maeveautomation.com}}}%
}
\newcommand{\placetextbox}[3]{
\setbox0=\hbox{#3}
\AddToShipoutPictureFG*{ \put(\LenToUnit{#1\paperwidth},\LenToUnit{#2\paperheight}){\vtop{{\null}\makebox[0pt][c]{#3}}}
}
}
\begin{document}
\maketitle

\begin{abstract}
This technical report presents an environment representation for use in vision-based navigation. The representation has two useful properties: 1) it has constant size, which can enable strong run-time guarantees to be made for control algorithms using it, and 2) it is structurally similar to a camera image space, which effectively allows control to operate in the sensor space rather than employing difficult, and often inaccurate, projections into a structurally different control space (e.g. Euclidean). The presented representation is intended to form the basis of a vision-based subsumption control architecture.


\end{abstract}

\section{INTRODUCTION}
Agents often encounter large and varying numbers of entities within a scene while performing navigation, which can be problematic for conventional planning and control approaches that reason explicitly over all entities. Consider, for instance, busy roadways or crowded sidewalks or convention halls: conventional planning approaches in these scenarios can suffer significant performance degradation as entity count increases~\cite{DBLP:conf/itsc/BrechtelGD11,DBLP:journals/arobots/GalceranCEO17,DBLP:series/sbis/OliehoekA16}. While more scalable approaches exist, they often have strict requirements on system dynamics~\cite{DBLP:conf/isrr/BergGLM09} or observability of agent policies~\cite{DBLP:journals/ijrr/BekrisGMK12}.

To help address the scalability problem, this report builds on \cite{JKJohnson_PPNIV17} to present a constant size environment representation for vision-based navigation. The representation is modeled after a camera image space, which is chosen because cameras are a ubiquitous sensor modality, and image space is typically discretized and constant size. The proposed representation allows planning and control routines to reason almost directly in sensor space thereby avoiding often complex and noisy transformations to and from a more conventional Euclidean space representation. This new representation can help vision-based mobile robots navigate complex multi-agent systems efficiently, and can aid satisfying the strict resource requirements often present in real-time, safety critical, and embedded systems~\cite{DBLP:conf/sigopsE/LouiseDDA02}. Further, the representation enables additional guidance information to be added in while making guarantees about the preservation of hard constraint (Definition~\ref{def:hard-constraint}) information. This property makes it ideal for use in vision-based subsumption control architectures.


\begin{figure}
  \begin{subfigure}{\linewidth}
  \centering
  \includegraphics[width=0.95\linewidth]{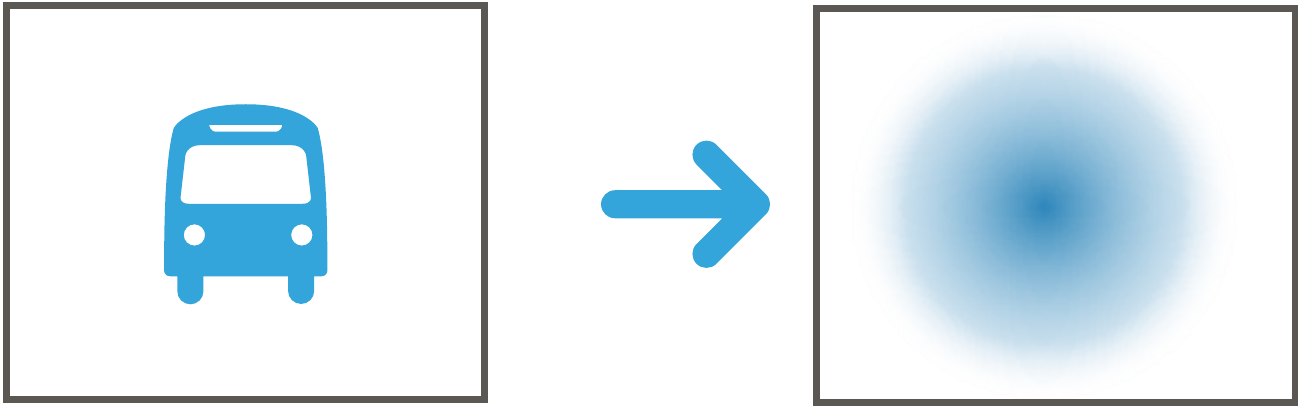}
  \end{subfigure}\par\medskip
  \begin{subfigure}{\linewidth}
  \centering
  \includegraphics[width=0.95\linewidth]{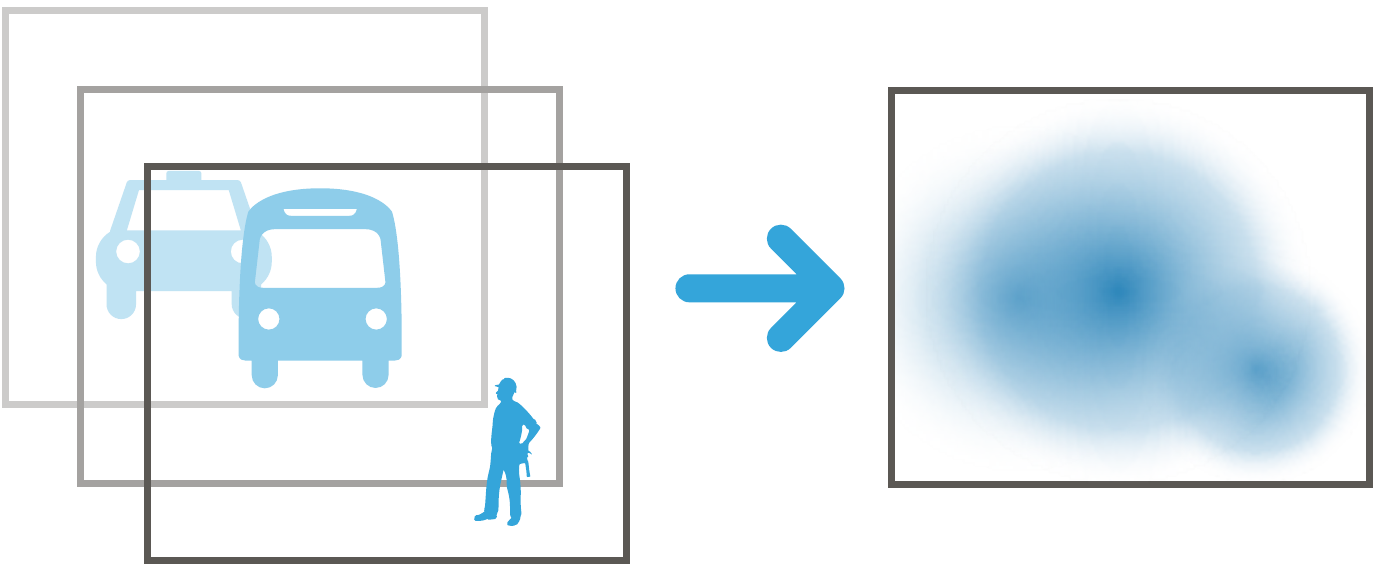}
  \end{subfigure}
\caption{Top: Illustration of an object in the image plane (left), and its Image Space Potential field (right). Bottom: Information from multiple sources (left) can be combined into a single field (right). Black boxes represent ROIs.}\label{fig:cisp-field}
\end{figure}

\section{BACKGROUND}
The approach in this report is based in part on {\em potential fields}~\cite{DBLP:conf/icra/Khatib85}. These fields represent attractive and repulsive forces as scalar fields over a robot's environment that, at any point, define a force acting on the robot that can be interpreted as a control command. As noted in literature, this type of approach is subject to local minima and the {\em narrow corridor} problem~\cite{Koren91potentialfield}, particularly in complex, higher-dimensional spaces~\cite{DBLP:books/daglib/0016830}. Randomized approaches can partially overcome these difficulties~\cite{126256,DBLP:journals/ijrr/BarraquandL91}, while extensions to globally defined {\em navigation functions}~\cite{koditschek-aam-1990,1249705,DBLP:conf/icra/Koditschek87} can theoretically solve them but are often difficult to use in practice. This report uses low-dimensional potential fields, limiting the possibility of narrow corridors, and designs the fields such that additional information can be added in to help break out of minima~\cite{DBLP:conf/icra/Masoud05}. Because the potential fields are modeled after an image space, controlling on them can be accomplished effectively through {\em visual servoing} techniques~\cite{Cowan:CSD-02-1200,DBLP:conf/icra/WeissSN85,DBLP:conf/icra/FomenaC07,DBLP:journals/corr/LeeLA17}.

In order to define values for the potential fields, this approach draws on a wealth of related works in optical flow and monocular collision avoidance, notably~\cite{DBLP:conf/icra/ByrneT09,DBLP:conf/icra/MoriS13,DBLP:conf/icpr/CamusCHH96,DBLP:conf/iccv/CoombsHHN95,DBLP:conf/scia/Farneback03,DBLP:journals/pami/NelsonA89,NASA-TM-104025}. The intuition of these approaches is that a sequence of monocular images provides sufficient information to compute {\em time-to-contact} (Definition~\ref{def:ttc}), which informs an agent about the rate of change of proximity.

The fields in this representation are intended for use in a subsumption architecture~\cite{DBLP:journals/trob/Brooks86} where additional information about the system can be layered in while hard constraint information is {\em guaranteed} to be preserved. This closure property is proven to hold under a restricted input space with specially constructed potential transform functions.

\section{DEFINITIONS}


\begin{defn}\label{def:potential-field}
A {\em potential field} (also {\em artificial potential field}) is field of artificial forces that attracts toward desired locations and repels from undesirable locations.
\end{defn}

\begin{defn}\label{def:ae-potential-field}
An {\em affinely extended potential field} is a potential field with a potential function that ranges over the affinely extended reals $\overline{\mathbb{R}}=\mathbb{R}\cup\{-\infty,+\infty\}$. A {\em positive} (or {\em negative}) affinely extended potential field is defined over $\overline{\mathbb{R}}$ but contains only positive (or only negative) infinite values.\footnote{For technical reasons, the potential fields are intentionally {\em not} defined over the projectively extended reals $\widehat{\mathbb{R}}=\mathbb{R}\cup\{\infty\}$.}
\end{defn}

\begin{defn}
An {\em image space potential function} is a mapping of an image pixel value $I(x,y)$ to a tuple in $\overline{\mathbb{R}}^2$ that consists of the potential value and its time derivative:
\begin{align}
I(x,y)\mapsto\overline{\mathbb{R}}^2
\end{align}
\end{defn}

\begin{defn}\label{def:asymptotic-region}
An {\em asymptotic region} $R_A$ is a closed set of points on $\mathbb{R}^2$ such that the potential function takes a value $\pm\infty$ for any element of $R_A$. (This is related to the notion of a {\em natural boundary} in complex analysis.)%
\end{defn}

\begin{defn}\label{def:encroachment}
{\em Encroachment} is the reduction in minimum proximity between two or more objects in a workspace $\mathcal{W}$ as measured by a metric $\mu(\cdot,\cdot)$.
\end{defn}

\begin{defn}\label{def:gca}
{\em Guided collision avoidance} describes the strategy of choosing goal-directed motions from the space of collision avoiding controls in order to navigate while satisfying collision constraints.
\end{defn}

\begin{defn}\label{def:gn}
{\em Navigation} or {\em multi-agent navigation} describes the general process of navigating a space possibly shared with multiple other agents.
\end{defn}

\begin{defn}\label{def:ttc}
{\em Time-to-contact} ($\tau$), is the predicted duration of time remaining before an object observed by a camera will come into contact with the image plane of the camera. The time derivative of $\tau$ is written $\dot{\tau}$.
\end{defn}

\begin{defn}\label{def:hard-constraint}
A {\em hard constraint} is a system constraint that an agent is never allowed to violate.
\end{defn}

\begin{defn}\label{def:soft-constraint}
A {\em soft constraint} is a system constraint that an agent only prefers not to violate.
\end{defn}

\section{THE IMAGE SPACE POTENTIAL FIELD}
Image Space Potential (ISP) fields are affinely extended potential fields that are modeled after image planes. As with image planes, these potential fields can be discretized, and regions of interest (ROIs) can be defined for them. In this report it is assumed that the fields are origin and axis aligned with the agent's camera image plane, and that they have the same ROIs (Figure~\ref{fig:cisp-field}).

\subsection{Representing Hard \& Soft Constraints}
In potential field representations the distinction between hard and soft constraints can be made in terms of the limiting value of the field as the robot approaches some state that would cause constraint violation: the limiting value of the field over states where hard constraint violation would occur can be infinite, such that no reward can overwhelm the cost, and the limiting value of the field over states where soft constraint violation would occur can be finite, such that a reward must be at least some value before the robot chooses to violate it. 

In order for the ISP field representation to be useful it must incorporate this notion of constraints, and it must maintain that notion through summation operations. It is shown below that sums of ISP fields do maintain this information through the use of asymptotic regions, and that they behave as expected so long as the following requirements are met:

\begin{enumerate}
\item All ISP fields involved in summation must have like affine extensions, i.e., all fields must either be positively or negatively affinely extended
\item ISP fields may only be multiplied by scalars in $(0, +\infty)$
\item ISP fields may only be elementwise multiplied by scalar fields where all values are in $(0, +\infty)$
\end{enumerate}

These properties ensure that operations on ISP fields are closed and that asymptotic regions are preserved. The following lemmas prove this:

\begin{lemma}\label{lem:ar-preservation}
Let $F_1$ and $F_2$ be ISP fields, and let $R_A$ be an asymptotic region in $F_1$. Define element $a\in F_2$ as any arbitrary point and define scalar value $s\in(0,+\infty)$. In any field $F_3=F_1+F_2$ or $F_4=s\cdot F_1$, $R_A$ will persist as an asymptotic region.
\end{lemma}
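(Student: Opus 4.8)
The plan is to argue directly from Definition~\ref{def:asymptotic-region}: a set $R_A$ is an asymptotic region of a field exactly when the field's potential function attains a value in $\{-\infty,+\infty\}$ at every point of $R_A$ (and $R_A$ is closed, a property that transfers unchanged since we never move points). So the whole burden reduces to checking what the two operations $F_1+F_2$ and $s\cdot F_1$ do to an infinite value, under the standing requirements listed just before the lemma — in particular that all fields share a like affine extension and that $s\in(0,+\infty)$.

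First I would handle the scaling case $F_4 = s\cdot F_1$. For any $p\in R_A$, $F_1(p)\in\{-\infty,+\infty\}$, and since $s$ is a strictly positive real, $s\cdot(\pm\infty)=\pm\infty$ in $\overline{\mathbb{R}}$ (this is the standard extended-real convention, and it is precisely why requirement~2 excludes $s=0$, which would give the indeterminate $0\cdot\infty$, and excludes negatives, which would flip the sign and potentially break the ``like affine extension'' bookkeeping downstream). Hence $F_4(p)=\pm\infty$ for every $p\in R_A$, so $R_A$ remains an asymptotic region of $F_4$; closedness is inherited because $R_A$ is literally the same point set.

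Next the sum case $F_3 = F_1 + F_2$. Fix $p\in R_A$, so $F_1(p)=\pm\infty$. By requirement~1, $F_1$ and $F_2$ are affinely extended with the same sign of infinity — say both positive, so $F_1(p)=+\infty$ and $F_2(p)\in\mathbb{R}\cup\{+\infty\}$. Then $F_3(p)=+\infty + F_2(p)=+\infty$, since $+\infty$ plus anything that is not $-\infty$ is $+\infty$; the dangerous case $(+\infty)+(-\infty)$ cannot occur precisely because the like-extension hypothesis forbids $F_2$ from taking the value $-\infty$. The negative case is symmetric. Thus $F_3(p)=\pm\infty$ at every $p\in R_A$, and again $R_A$ is closed as a point set, so it persists as an asymptotic region of $F_3$. (I would also remark that the argument only used that $R_A$ is an asymptotic region of one summand; it says nothing about asymptotic regions of $F_2$ unless those happen to also lie in $R_A$ or be disjoint, which is fine for the claim as stated.)

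The main obstacle here is not any computation — it is making sure the extended-arithmetic edge cases are genuinely ruled out by the hypotheses rather than swept under the rug. The entire content of the lemma is that requirements~1 and~2 are exactly the conditions under which no indeterminate form ($0\cdot\infty$ or $\infty-\infty$) can arise, so I would be careful to state the extended-real conventions being used and to point to where each hypothesis is consumed. The definition of asymptotic region also only requires the value to be \emph{some} element of $\{-\infty,+\infty\}$, not a fixed sign across the region, but since the surrounding text speaks of \emph{positive} and \emph{negative} affinely extended fields, I would note in passing that the sign is in fact preserved too, which is the stronger statement actually needed for later closure results.
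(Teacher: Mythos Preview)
Your proposal is correct and follows essentially the same approach as the paper: reduce to a single point of $R_A$ and invoke the extended-real arithmetic identities $a\pm\infty=\pm\infty$ (for $a\neq\mp\infty$) and $s\cdot(\pm\infty)=\pm\infty$ (for $s\in(0,+\infty)$), with the like-affine-extension and positive-scalar hypotheses supplying exactly the side conditions needed to avoid indeterminate forms. Your additional remarks on closedness of $R_A$ and sign preservation are welcome elaborations but do not change the underlying argument.
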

\begin{proof}
It suffices to show that the property holds for a single element of the field. Let $\pm\infty$ be the value of any element of $F_1$ from $R_A$. Then, by definition of addition and multiplication on the affinely extended real number line:

\begin{align*}
a\pm\infty&=\pm\infty,&a\ne\mp\infty\\
s\cdot(\pm\infty)&=\pm\infty,&s\in(0,+\infty)
\end{align*}

The restrictions that ISP fields have like affine extensions and that scalars belong to $(0,+\infty)$ guarantee the conditions in the right column. Thus, any point in $R_A$ with infinite potential in $F_1$ will have infinite potential in $F_3$ or $F_4$.
\end{proof}

\begin{lemma}\label{lem:closure}
Let $F_1$ and $F_2$ be ISP fields. Define elements $a_1\in F_1$ and $a_2\in F_2$ as any arbitrary points in $F_1$ and $F_2$, and define scalar value $s\in(0,+\infty)$. For all $a_3\in F_3=F_1+F_2$ and $a_4\in F_4=s\cdot F_1$, it will be that $a_3,a_4\in\overline{\mathbb{R}}$
\end{lemma}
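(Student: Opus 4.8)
The plan is to argue elementwise, exactly as in the proof of Lemma~\ref{lem:ar-preservation}. An ISP field is a discretized, constant-size array whose entries are tuples in $\overline{\mathbb{R}}^2$ (a potential value and its time derivative), and both summation and scalar multiplication act coordinatewise within each tuple and entrywise across the field. Hence it suffices to show that a single real-or-infinite coordinate of a single element is mapped back into $\overline{\mathbb{R}}$; closure of the whole field is then immediate, and closure of a finite sum of more than two fields follows by induction on the number of summands.

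First I would dispatch the scalar-multiplication case $F_4 = s\cdot F_1$. For an arbitrary coordinate value $v\in\overline{\mathbb{R}}$ of $a_1$, split on whether $v$ is finite. If $v\in\mathbb{R}$, then $s\cdot v\in\mathbb{R}\subseteq\overline{\mathbb{R}}$. If $v=\pm\infty$, then because requirement~2 forces $s\in(0,+\infty)$ --- in particular $s\neq 0$ and $s\neq+\infty$ --- the product $s\cdot(\pm\infty)=\pm\infty\in\overline{\mathbb{R}}$, with no $0\cdot\infty$ indeterminate form arising. So every $a_4$ is in $\overline{\mathbb{R}}$. Requirement~3 (elementwise multiplication by a strictly positive scalar field) is covered by the identical case analysis applied coordinate by coordinate, so it needs no separate argument.

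Next, the addition case $F_3 = F_1 + F_2$. Let $v_1,v_2\in\overline{\mathbb{R}}$ be corresponding coordinates of $a_1$ and $a_2$. If both are finite, $v_1+v_2\in\mathbb{R}$. If exactly one is infinite, say $v_1=\pm\infty$ and $v_2\in\mathbb{R}$, then $v_1+v_2=\pm\infty\in\overline{\mathbb{R}}$. The only remaining case is that both are infinite, and this is the crux of the lemma: the sole way addition can escape $\overline{\mathbb{R}}$ is via the indeterminate form $(+\infty)+(-\infty)$. Requirement~1 --- that $F_1$ and $F_2$ have like affine extensions --- guarantees that every infinite value occurring in either field is $+\infty$ (positive extension) or every such value is $-\infty$ (negative extension). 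Hence $v_1+v_2$ is either $(+\infty)+(+\infty)=+\infty$ or $(-\infty)+(-\infty)=-\infty$, both in $\overline{\mathbb{R}}$. Therefore every $a_3\in\overline{\mathbb{R}}$, completing the proof.

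The main obstacle is entirely this $\infty-\infty$ case; everything else is routine arithmetic on the affinely extended reals. I would make explicit that requirement~1 is not merely convenient but necessary: dropping it would let an attractive $-\infty$ and a repulsive $+\infty$ meet in a summation and collapse to an undefined value, breaking closure and, with it, the asymptotic-region preservation established in Lemma~\ref{lem:ar-preservation}. It is worth noting, finally, that closure and asymptotic-region preservation together are what justify treating ISP-field summation as a well-behaved ``layering'' operation in the intended subsumption architecture.
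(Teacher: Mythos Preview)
Your proposal is correct and follows essentially the same approach as the paper: reduce to a single element, split into finite and infinite cases, and observe that the requirements rule out the indeterminate forms $(+\infty)+(-\infty)$ for addition and $0\cdot\infty$ for scalar multiplication. Your case analysis is somewhat more explicit than the paper's (which defers the infinite-addition cases back to Lemma~\ref{lem:ar-preservation} and highlights the $0\cdot\infty$ obstruction as the reason $s$ must be finite), but the substance is identical.
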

\begin{proof}
It suffices to show that the property holds for a single element of the field. When $a_1,a_2\in\mathbb{R}$ addition is closed. As noted in Lemma~\ref{lem:ar-preservation} when either, or both, is infinite addition is also closed, assuming $F_1$ are both either positive or negative affinely extended fields. Similar arguments apply to scalar multiplication. However, if $s$ were allowed to range to infinite values, the following would result in an indeterminate form for field value $a$, and closure would be broken:

\begin{align*}
s\cdot a&\notin\overline{\mathbb{R}},&a=0, s=+\infty
\end{align*}

Thus, addition over strictly positive or strictly negative affinely extended fields and element-wise scalar multiplication with $s\in(0,+\infty)$ are both closed operations.

\end{proof}

Note that Lemmas~\ref{lem:ar-preservation} \&~\ref{lem:closure} do {\bf not} hold if ISP fields are allowed infinite values of mixed signs. This is why ISP fields are restricted to only positive or only negative affinely extended potential fields.

\subsection{Subsumption through Addition}
The results of Lemmas~\ref{lem:ar-preservation} \&~\ref{lem:closure} are powerful because they imply that the information of arbitrary fields can be added together without losing information about hard constraints. Thus, a control architecture using ISP fields can implement subsumption through addition.

\section{The Potential Function}

The potential function, which maps image pixel values to potential values, can be defined in arbitrary ways, either with geometric relations, learned relations, or even heuristic methods. In this report, hard constraint information will be derived from a geometric measure over pixels in a temporal image sequences called {\em time-to-contact}, or $\tau$. Soft constraint information will be represented by user-specified values meant to bias how strongly directed a chosen control is to a particular goal. The potential function maps these measurement values to a unitless potential value space through the {\em potential transformation} defined later in this section.

\subsection{Obtaining Hard Constraint Values}\label{sec:computing-tau}
As noted often in literature (e.g.~\cite{NASA-TM-104025,DBLP:conf/icpr/CamusCHH96,DBLP:conf/iccv/CoombsHHN95,DBLP:journals/pami/NelsonA89}), $\tau$ can be computed directly from the motion flow of a scene, which is the vector field of motion in an image due to relative motions between the scene and the camera. Unfortunately, it is typically not possible to measure motion flow directly, so it is usually estimated via {\em optical flow}, which is defined as the {\em apparent} motion flow in an image plane. Historically this has been measured by performing some kind of matching of, or minimization of differences between, pixel intensity values in subsequent image frames~\cite{DBLP:journals/ai/HornS81,DBLP:conf/scia/Farneback03,DBLP:conf/ijcai/LucasK81}. More recently deep learning techniques have also been successfully applied to the problem~\cite{DBLP:conf/iccv/WeinzaepfelRHS13}.

Assuming some reasonably accurate estimation of optical flow vector field exists, $\tau$ can be computed directly under certain assumptions~\cite{DBLP:conf/icpr/CamusCHH96}. A significant advantage of computing $\tau$ from optical flow, in particular dense optical flow, is that the computation is independent of the number of objects in a scene. In other words, given optical flow, the scalability issues of object tracking and segmentation can be avoided. In practice, however, the computation of optical flow can be noisy and error prone, so feature- and segmentation-based approaches can also be used~\cite{DBLP:conf/icra/MoriS13,DBLP:conf/icra/ByrneT09}. The idea of these approaches is to compute $\tau$ from the rate of change in detection {\em scale}. For a point in time, let $s$ denote the scale (maximum extent) of an object in the image, and let $\dot{s}$ be its time derivative. When the observed face of the object is roughly parallel to the image plane, and under the assumption of constant velocity translational motion and zero yaw or pitch, it is straightforward to show that~\cite{camus-phdthesis}:

\begin{align}
\tau=\frac{s}{\dot{s}}\label{eq:tau-scale}
\end{align}

As shown in~\cite{JKJohnson_PPNIV17}, scale has a useful invariance property for these types of calculations that can make $\tau$ computations robust to certain types of noise and assumption violations. Lemma~\ref{lem:scale-invariance} demonstrates this:

\begin{lemma}\label{lem:scale-invariance}
The scale $s$ of an object on the image plane is invariant to transformations of the object under $SE(2)$ on the $XY$ plane.
\end{lemma}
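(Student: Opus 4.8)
The plan is to reduce the statement to two elementary facts: that perspective projection, restricted to a plane parallel to the image plane, acts as a similarity transformation, and that the diameter (``maximum extent'') of a point set is invariant under isometries. Take the $XY$ plane to be parallel to the image plane, so that the optical axis is $Z$; then an element of $SE(2)$ acting on the $XY$ coordinates both (a) leaves every point's depth $Z$ unchanged and (b) acts as an isometry of $\mathbb{R}^3$ (a rotation about the $Z$-axis followed by a translation parallel to the image plane). Combined with the standing assumption that the observed face of the object is roughly parallel to the image plane --- so that the relevant points of the object all lie at (approximately) one depth $d$ --- this will be enough.

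First I would fix notation: model the camera as a pinhole with focal length $f$, so projection is $\pi\colon (X,Y,Z)\mapsto (f/Z)(X,Y)$, and model the observed face as a planar set $\mathcal{F}$ contained in the plane $\{Z=d\}$. Restricting $\pi$ to this plane gives $(X,Y,d)\mapsto (f/d)(X,Y)$, i.e. uniform scaling by the constant factor $f/d$; hence the scale $s$, defined as the diameter of $\pi(\mathcal{F})$, equals $(f/d)$ times the intrinsic diameter of $\mathcal{F}$. Next I would apply $g\in SE(2)$, written $(X,Y)\mapsto R(X,Y)+t$ with $R\in SO(2)$ and $t\in\mathbb{R}^2$: since $g$ fixes the depth coordinate, $g\cdot\mathcal{F}$ still lies in $\{Z=d\}$, so the transformed scale is $(f/d)$ times the diameter of $g\cdot\mathcal{F}$; and since $g$ is an isometry, that diameter equals the diameter of $\mathcal{F}$. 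Therefore the scale is unchanged. Equivalently, one can verify directly that $\pi\circ g$ restricted to $\{Z=d\}$ equals (rotation of the image plane by $R$ about the origin, then translation by $(f/d)t$) composed with $\pi$, which exhibits the image of the face as a rigid copy of the original and makes the diameter-invariance immediate.

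The step I expect to be the crux is not a calculation but the modeling assumption: the constant-depth hypothesis is exactly what makes the argument work. If points of the object lay at differing depths, a translation in $XY$ would displace them by differing amounts in the image (parallax), the projected object would no longer be a rigid copy of itself, and its diameter could change. So the essential move is to make explicit that ``observed face parallel to the image plane'' confines the relevant geometry to one depth, collapsing the perspective map to a similarity and leaving only the trivial isometry-invariance of diameter to finish. I would close by noting that the argument degrades gracefully --- approximate parallelism yields approximate scale invariance --- which is the robustness property the surrounding discussion of $\tau=s/\dot{s}$ relies on.
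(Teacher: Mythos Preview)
Your proof is correct and follows essentially the same approach as the paper: both observe that at a fixed depth $Z$ the pinhole projection reduces to a uniform scaling, so the image-plane scale is a constant multiple of a Euclidean length on the $XY$ plane, which $SE(2)$ preserves by definition. The paper carries this out concretely for a single line segment (deriving $s=\tfrac{1}{Z}\sqrt{\Delta X^2+\Delta Y^2}$ and noting $SE(2)$ fixes $\Delta X^2+\Delta Y^2$), whereas you phrase the identical idea more abstractly via similarity of the restricted projection and isometry-invariance of diameter for a general planar face.
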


\begin{proof}
Let $(X_1,Y_1,Z)$ and $(X_2,Y_2,Z)$ be end points of a line segment on the $XY$ plane in the world space, with $XY$ parallel to the image plane and $Z$ coincident with the camera view axis. Without loss of generality, assume unit focal length. The instantaneous scale $s$ of the line segment in the image plane is given by:

\begin{align}
s=\frac{1}{Z}\sqrt{\Delta X^2+\Delta Y^2}
\end{align}

Thus, any transformation of the line segment on the $XY$ plane for which $\Delta X^2+\Delta Y^2$ is constant makes $s$, and thereby $\dot{s}$ and $\tau$, independent of the values of $(X_1,Y_1)$ and $(X_2,Y_2)$. By definition, $SE(2)$ satisfies this condition.
\end{proof}

In addition, the time derivative $\dot{\tau}$ of $\tau$, when available, enables a convenient decision function for whether an agent's current rate of deceleration is adequate to avoid head-on collision or not~\cite{citeulike:2798975}:

\begin{align}
   \dot{\tau}\mapsto\left\{
     \begin{array}{lr}
       1 & : \dot{\tau}\ge-0.5\\
       0 & : \dot{\tau}<-0.5
     \end{array}
   \right.\label{eq:tau-dot-decision}
\end{align}

Equation~\ref{eq:tau-scale} allows the computation $\tau$ for whole regions of the image plane at once given a time sequence of labeled image segmentations, while $\dot{\tau}$ enables decisions to be made about the safeness of the agent's current state.

\subsection{Obtaining Soft Constraint Values}
Whereas hard constraint values are measurements of geometric properties and intended to prevent undesired physical interactions in the world, soft constraint values are intended only to bias how control actions are chosen. The potential transformation described in the next section guarantees that soft constraint values can never override hard constraints values, which allows soft constraint values to be arbitrarily chosen. In this report, soft constraint values are assume to be user-specified.

The following two sections describe how hard and soft constraint values are transformed into the potential space.

\subsection{The Potential Transformation}
\begin{figure}
  \begin{subfigure}{\linewidth}
  \centering
  \includegraphics[width=1\linewidth]{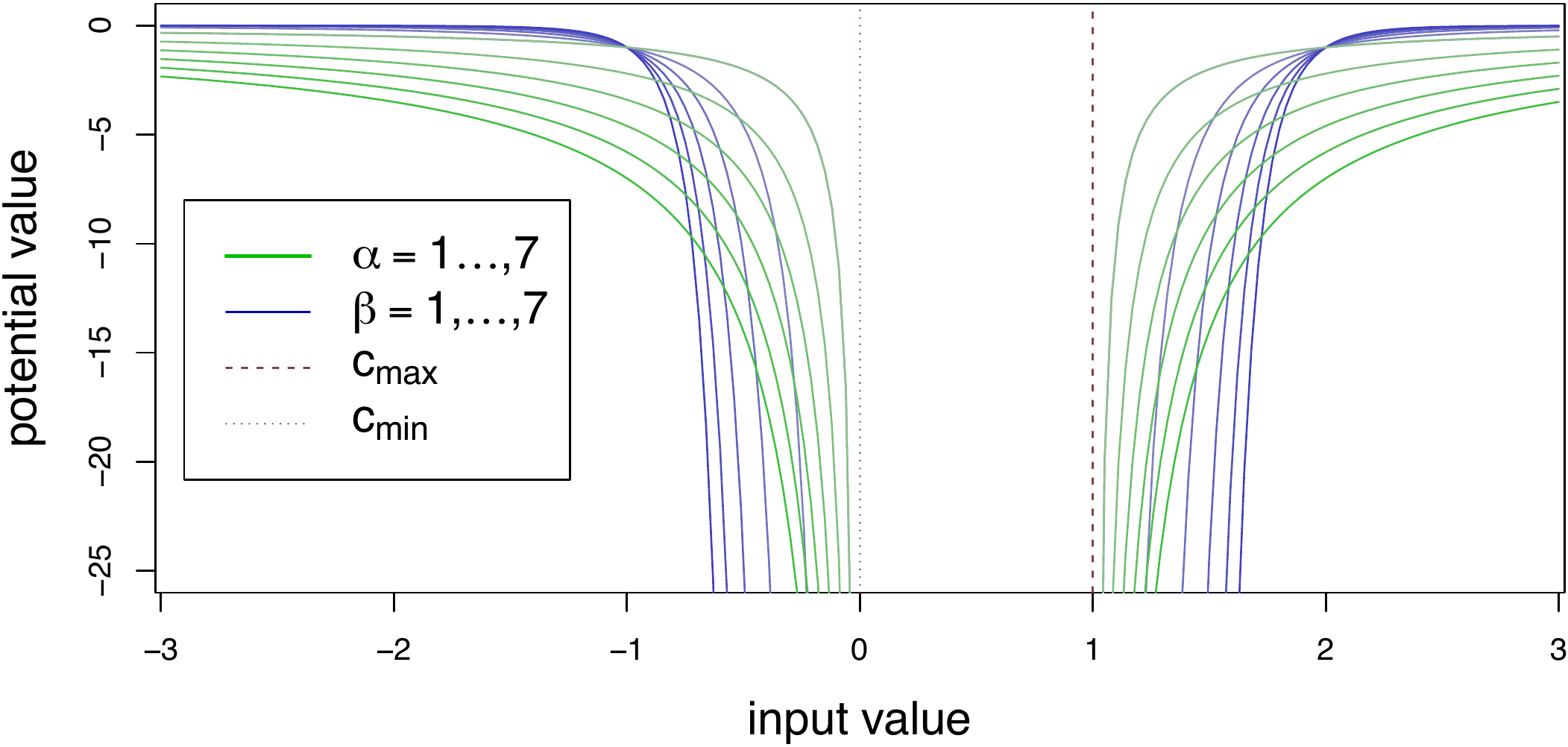}
  \end{subfigure}
\caption{This plot shows a selection of shape parameters settings in Equation~\ref{eq:hard-constraint-mapping}. The green lines show $\alpha$ ranging from $1,\ldots,7$ with $\beta=1$ and the blue lines show $\beta$ ranging from $1,\ldots,7$ with $\alpha=1$. The constraint range is taken to be $\left[\underline{c},\overline{c}\right]=[0,1]$. Translation is $0$. Best viewed in color.}\label{fig:hard-constraint-plot}
\end{figure}

\begin{figure}
  \begin{subfigure}{\linewidth}
  \centering
  \includegraphics[width=1\linewidth]{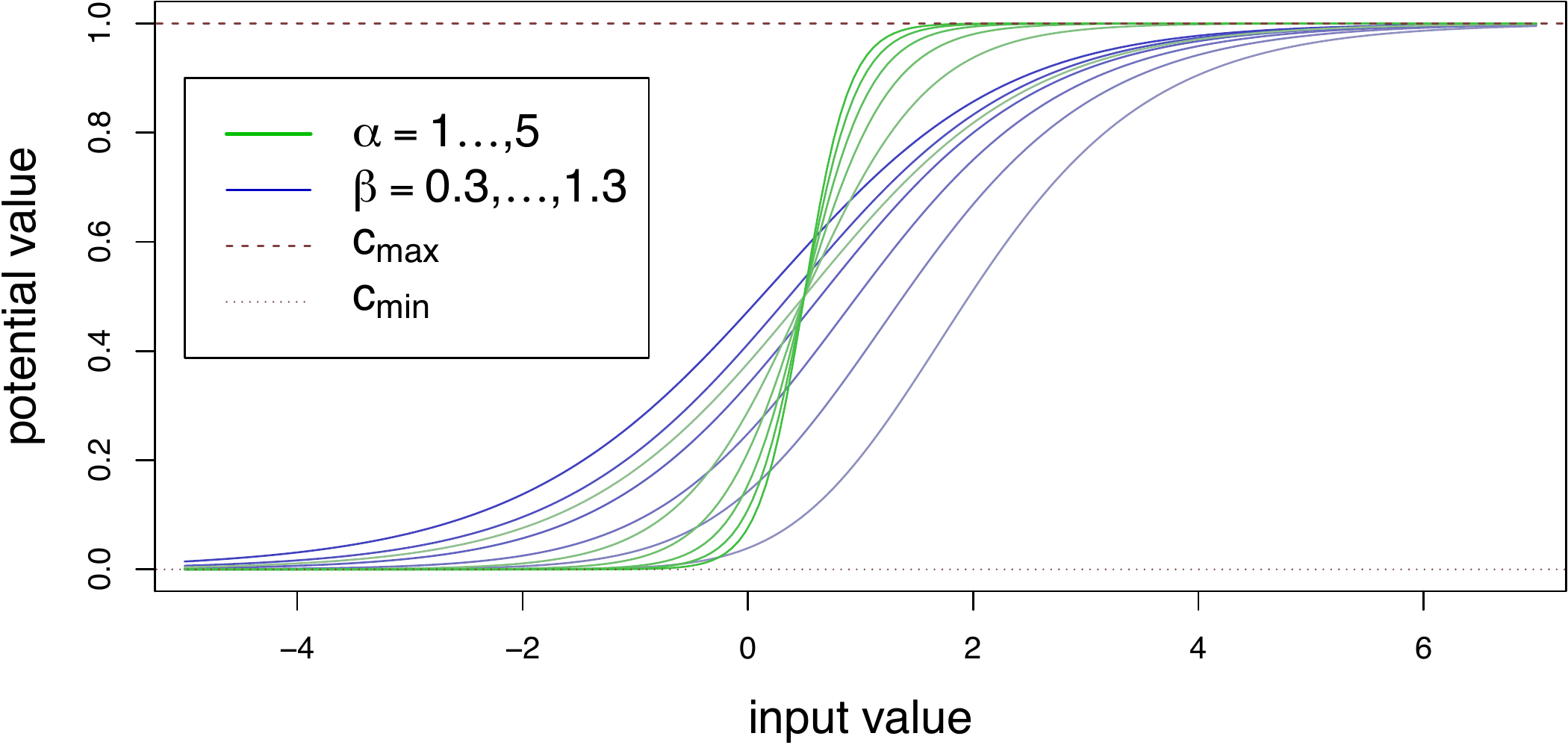}
  \end{subfigure}
\caption{This plot shows a selection of shape parameter settings in Equation~\ref{eq:soft-constraint-mapping}. The green lines show $\alpha$ ranging from $1,\ldots,5$ with $\beta=1$ and the blue lines show $\beta$ ranging from $0.3,\ldots,1.3$ with $\alpha=1$. The constraint range is taken to be $\left[\underline{c},\overline{c}\right]=[0,1]$. Translation is $0$. Best viewed in color.}\label{fig:soft-constraint-plot}
\end{figure}

The task of projecting sensor measurements and user-defined bias values is accomplished by a {\em potential transformation} that transforms pixel-wise measurements that have some semantic meaning into the unitless potential space. Two types of potential transformations are presented: a hard constraint transform and a soft constraint transform. Without loss of generality, assume for the remainder of this report only negative affinely extended potential fields.

The hard constraint transform is intended to map values within a specific range to infinite potential values, and values outside the range to finite potential values. In this way sensor measurements corresponding to hard constraint violations are mapped into asymptotic regions, which ensures that information about them is preserved during subsumption. At time $t$, for a finite input pixel value \pixelval, finite range $\left[\underline{c},\overline{c}\right]$, y-axis translation value $t_y$, and finite shape parameters $\alpha,\beta>0$, the hard constraint transform $C_h$ is given in Equation~\ref{eq:hard-constraint-mapping} and illustrated in Figure~\ref{fig:hard-constraint-plot}. The time derivative of $C_h$ is given in Equation~\ref{eq:hard-constraint-mapping-dot}.

\begin{align}
   C_h\left(\pixelval,\left[\underline{c},\overline{c}\right]\right)=\left\{
     \begin{array}{ll}
     t_y - \alpha(\underline{c}-\pixelval)^{-\beta} & : \pixelval<\underline{c}\\
     -\infty & : \pixelval\in\left[\underline{c},\overline{c}\right]\\
     t_y - \alpha(\pixelval-\overline{c})^{-\beta} & : \pixelval>\overline{c}
     \end{array}
   \right.\label{eq:hard-constraint-mapping}\\
  \frac{d}{dt}C_h=\left\{
     \begin{array}{ll}
     \alpha\beta(\underline{c}-\pixelval)^{-\beta-1}\frac{d}{dt}\pixelval & : \pixelval<\underline{c}\\
     0 & : \pixelval\in\left[\underline{c},\overline{c}\right]\\
     \alpha\beta(\pixelval-\overline{c})^{-\beta-1}\frac{d}{dt}\pixelval & : \pixelval>\overline{c}
     \end{array}
   \right.\label{eq:hard-constraint-mapping-dot}
\end{align}

Conversely, the soft constraint transform is intended to map all values into a given finite range. In this way all user-given bias values are prevented from becoming part of asymptotic regions, which ensures that hard constraint information is not corrupted. At time $t$, for a finite input value \pixelval, finite range $\left[\underline{c},\overline{c}\right]$ with midpoint $c_\textrm{mid}$, x-axis translation value $t_x$, and finite shape parameters $\alpha,\beta>0$, the soft constraint transform $C_s$ is a parameterized logistic function given in Equation~\ref{eq:soft-constraint-mapping} and illustrated in Figure~\ref{fig:soft-constraint-plot}. The time derivative is given in Equation~\ref{eq:soft-constraint-mapping-dot}.

\begin{align}
   C_s\left(\pixelval,\left[\underline{c},\overline{c}\right]\right)=\underline{c} + \frac{\overline{c}-\underline{c}}{\left(1+ e^{-\alpha\left(\pixelval-t_x-c_\textrm{mid}\right)}\right)^{1/\beta}}\label{eq:soft-constraint-mapping}\\
   \frac{d}{dt}C_s=\frac{\alpha\left(\overline{c}-\underline{c}\right)e^{-\alpha\left(\pixelval-t_x-c_\textrm{mid}\right)}}{\beta\left(1+e^{-\alpha\left(\pixelval-t_x-c_\textrm{mid}\right)}\right)^{1+1/\beta}}\frac{d}{dt}I^t_{mn}\label{eq:soft-constraint-mapping-dot}
\end{align}

The shape parameters in both Equations~\ref{eq:hard-constraint-mapping} \&~\ref{eq:soft-constraint-mapping} allow potential field shape to be manipulated in problem-specific ways. The time derivatives in Equations~\ref{eq:hard-constraint-mapping-dot} \&~\ref{eq:soft-constraint-mapping-dot} are useful in the situation that a time derivative of the input pixel value is given. For convenience, let $\Gamma=\left\langle t,\underline{c},\overline{c},\alpha,\beta\right\rangle$ contain all parameters for computing a constraint transform.

\section{VISION-BASED SUBSUMPTION ARCHITECTURE}
This section outlines a subsumption-based control architecture for ISP fields. Subsumption is implemented by formulating the general navigation problem as a guided collision avoidance problem, in which a global guidance controller subsumes a local collision avoidance controller.

The navigation problem can be described as below:

\begin{problem}\label{prob:main-problem} {\em Navigation}: Let \set{A} be a set of agents navigating along a 2D manifold and assume that collision is never inevitable in the initial system state. Assume each agent is equipped with cameras, and that each agent has knowledge of the physical dynamic properties of the environment and of other agents. Assume each agent actuates according to a unique decision process and that each agent may assume with certainty that other agents will prefer to avoid collision and to avoid causing collision. How can an agent $A\in\set{A}$ navigate toward a goal while remaining collision free?
\end{problem}

Problem~\ref{prob:main-problem} can be decomposed into a local collision avoidance problem, and a global guidance problem. The local problem is given below:

\begin{problem}\label{prob:encroachment-detection} {\em Collision Avoidance}: Assume an agent $A$ navigating a workspace \set{W} receives some observation input $O_t$ of \set{W} over time. Let $\set{A}$ be the set of objects and agents that does not include $A$. For a distance metric $\mu$, threshold $\varepsilon>0$, and a sequence of observations $O_i,\ldots,O_t$, how can $A$ estimate the rate of change of $\min_{A_i\in\set{A}}\mu(A,A_i)$ such that it can compute controls to maintain $\min\mu>\varepsilon$?
\end{problem}

The metric $\mu$ between any two points is taken as the $\tau$ measure between those points, thus, Problem~\ref{prob:encroachment-detection} makes the assumption that maintaining some $\tau>\varepsilon$ suffices to ensure collision avoidance, which is often a reasonable assumption~\cite{lamm1999highway}. This assumption implies several other assumptions, namely, that agents maintain controllability at all times, and that agents react within the $\tau$ horizon. These assumptions can be loosened in a probabilistically rigorous way by exploiting the Safety-Constrained Interference Minimization Principle~\cite{Johnson_semiautonomouslongitudinal,6338731}.

Finally, the global guidance controller problem can be defined as:

\begin{problem}\label{prob:guidance-control} {\em Guidance:} For desired goal-directed control $u^d$, control space metric $\mu_c$, and given a feasible control set $\mathcal{U}$, choose a control $u^\star$ such that:
\begin{align*}
u^\star=\arg\min_{u\in\mathcal{U}}\mu_c(u^d,u)
\end{align*}
\end{problem}

Problem~\ref{prob:guidance-control} can be solved any number of optimization techniques. In particular, if the solution algorithm to Problem~\ref{prob:encroachment-detection} produces convex sets, many efficient optimization routines become available.


\subsection{Example Algorithms}
Algorithms~\ref{alg:virtual-bumpers} \&~\ref{alg:virtual-bumpers-sd} address Problems~\ref{prob:main-problem} \&~\ref{prob:encroachment-detection} explicitly with Problem~\ref{prob:guidance-control} being straightforward to solve given the input. This solution is intended as a sketch of a control system based on ISP fields, so some details are omitted. The biasing fields in Algorithm~\ref{alg:virtual-bumpers-sd} are derived from user-provided values.

While Problem~\ref{prob:main-problem} is formulated generally similar to a ground navigation scenario, ISP fields can applied to arbitrary environs so long as Algorithms~\ref{alg:virtual-bumpers} \&~\ref{alg:virtual-bumpers-sd} are modified accordingly.

\section{Conclusion}
This report presented Image Space Potential (ISP) fields, which are a general environment representation for vision-based navigation. ISP fields are constant space complexity with respect to the image, which is crucial for ensuring scalability and running time of algorithms. ISP fields also enable planning and control to occur in a space that is structurally similar to the sensor space, which means that sensor data does not need to go through what are often difficult and noisy projections into a structurally different planning and control space.


ISP fields are intended to form the foundation of a vision-based subsumption control architecture, such as that described in the previous section. To enable this use, they allow arbitrary amounts of guidance information to be added into the representation while guaranteeing that hard constraint information will not be lost or corrupted in the process. This  capability makes ISP field representation particularly well suited to enabling machine learning to be applied to safety critical applications, such as automated driving, where it is often impractical for machine learning alone to make such guarantees~\cite{DBLP:journals/corr/Shalev-ShwartzS16a}.

An implementation of the data structures and algorithms described in this report is being developed and maintained under open source license~\cite{maeve-development-libraries}. The implementation is expected to change and grow over time, so for any disparity between the implementation and this document, the implementation should be assumed to be authoritative.

The implementation itself is developed under ROS~\cite{ROS}, and ISP field data structures and operations are implemented using OpenCV~\cite{opencv_library}, a highly optimized, industry standard computer vision library.

\begin{algorithm}
\caption{This algorithm addresses Problem~\ref{prob:encroachment-detection}. Given an ISP field $F$, compute the set of steering and acceleration commands where $w_\theta$ is a kernel for computing the steering angle to acceleration map at a single horizon line $h$, $K_P$, $K_D$ are proportional and derivative gains, and $\Gamma_u=\langle t, \underline{c}, \overline{c}, \alpha_u, \beta_u\rangle$ parameterizes a soft constraint mapping onto the set of normalized valid longitudinal controls.}
\label{alg:virtual-bumpers}
\begin{algorithmic}[1]
\Procedure{ControlSet}{$F,\varepsilon,w_\theta, h, K_P, K_D$}
\State Let $I_c$ be the list of image column indices
\State Let $M_\theta$ map $\langle p,\dot{p}\rangle$ to $I_c$ via $w_\theta$ erosion along $h$\label{line:theta-window}
\State Let $M_u\gets\emptyset$ map $i\in I_c$ onto sets of accelerations
\For{$i\in I_c$}
\State $\langle p, \dot{p}\rangle\gets M_\theta[i]$
\State $u\gets\langle p, \dot{p}\rangle\cdot\langle K_P, K_D\rangle$
\State $a_\textrm{max}\gets C_s(u, \Gamma_u)$
\State $M_u[i]\gets\left[\underline{c}, a_\textrm{max}\right]$
\EndFor
\State\Return$M_u$s
\EndProcedure
\end{algorithmic}
\end{algorithm}

\begin{algorithm}
\caption{This algorithm addresses Problem~\ref{prob:main-problem}. For a given ISP field $F$ and guidance control $u^d=\langle\theta^d,a^d\rangle$ from Problem~\ref{prob:guidance-control}, compute the control that safely controls the agent $A$. See Algorithm~\ref{alg:virtual-bumpers} for descriptions of the other parameters.}
\label{alg:virtual-bumpers-sd}
\begin{algorithmic}[1]
\Procedure{SDControl}{$u^d,F,\varepsilon,w_\theta,h,K_P,K_D$}
\State Let $s$ be the current state of $A$
\State Let $I_c$ be the list of image column indices
\State Let $M_u\gets\textrm{ControlSet}(F,\varepsilon,w_\theta,h,K_P,K_D)$\label{line:safe-control}
\State Let $F_B^d\gets $ biasing field from $u^d$
\State Let $F_B^u\gets $ biasing field from $M_u$
\State Let $F^\star\gets F + F_B ^d + F_B^u$
\State Let $H\gets $ min reduce $F^\star$ along $h$
\State Let $i^\star\gets $ max-valued column index of $H$
\State $\theta^\star\gets$ steering angle corresponding to $i^\star$
\State $a^\star\gets\arg\min_{a\in M_u[i]}\mu_c(a,a^d)$
\State\Return$\langle\theta^\star,a^\star\rangle$
\EndProcedure
\end{algorithmic}
\end{algorithm}

\bibliographystyle{IEEEtran}
\bibliography{IEEEabrv,refs,aaai2018}

\end{document}